\documentclass{article} % For LaTeX2e
\usepackage{iclr2027_conference,times}

\usepackage{amsmath,amsfonts,bm}

\def\eqref#1{equation~\ref{#1}}
\def\1{\bm{1}}

\DeclareMathAlphabet{\mathsfit}{\encodingdefault}{\sfdefault}{m}{sl}
\SetMathAlphabet{\mathsfit}{bold}{\encodingdefault}{\sfdefault}{bx}{n}

\usepackage{hyperref}
\hypersetup{hidelinks}
\usepackage{url}
\usepackage{amsmath}
\usepackage{amssymb}
\usepackage{amsthm}
\usepackage{mathtools}
\usepackage{booktabs}
\usepackage{graphicx}
\usepackage{placeins}
\usepackage{float}

\newtheorem{proposition}{Proposition}
\newtheorem{corollary}{Corollary}
\newtheorem{remark}{Remark}

\newcommand{\TV}{\operatorname{TV}}

\title{SAKI: Maximal-Coupling-Routed Teacher Supervision for On-Policy Distillation}

\author{
\textbf{Miteto Wei}$^{1}$ \quad
\textbf{Xiaohan Wang}$^{1,*}$ \quad
\textbf{Zehao Chen}$^{1}$ \quad
\textbf{Jiajun Chai}$^{1}$ \quad
\textbf{Sichao Liu}$^{2}$ \\
\textbf{Li Wang}$^{1}$ \quad
\textbf{Haoyuan Xu}$^{1}$ \quad
\textbf{Zhaoyu Hu}$^{1}$ \quad
\textbf{Wei Lin}$^{1}$ \quad
\textbf{Guojun Yin}$^{1,*}$ \\[0.35em]
{\normalfont $^{1}$Meituan \qquad
$^{2}$KTH Royal Institute of Technology} \\[-0.05em]
{\normalfont\small
\texttt{vizzlin@foxmail.com,wangxiaohan17@meituan.com,yinguojun02@meituan.com}
}
}

\iclrfinalcopy

\begin{document}

\maketitle
\fancyhead[L]{Preprint}
\begingroup
\renewcommand{\thefootnote}{*}
\footnotetext{Corresponding author}
\endgroup
% ==================================================================
% ABSTRACT
% ==================================================================

\begin{abstract}
On-policy distillation (OPD) reduces train--test state mismatch by training a
student on its own generated trajectories.
However, a weak student may initially visit poor, teacher-misaligned prefixes,
forcing the teacher to provide supervision on states that it would rarely
generate under its own policy.
Teacher-guided rollout policies improve the visited state distribution, but
typically retain the same per-prefix reverse-KL objective.
We introduce SAKI (Supervision Allocation with KL-constrained Interpolation),
which \textbf{routes token-level supervision using realized accept/correction
events from maximal coupling}.
We construct a geometrically interpolated behavior policy inside a
student-centered KL trust region and realize it through maximal coupling with
the student.
The coupling preserves a student proposal whenever possible and exposes a
correction event precisely when realizing the guided policy requires an
intervention.
Accepted positions retain reverse-KL supervision, whereas correction
positions switch to direct supervision on the teacher's highest-probability
token.
Because the \textbf{correction probability is exactly $\TV(p_t,q_t)$}, the same
trust-region radius constrains rollout deviation and upper-bounds intervention
and specialized-supervision frequency.
An engine-resident speculative verifier preserves the exact-$q$ trajectory
distribution and coupling semantics while improving matched-workload rollout
throughput by \textbf{$4.22\times$}.
Across seven mathematical reasoning benchmarks, our method
\textbf{improves the matched teacher-guided baseline in Mean@8 and Pass@8
for both 1.7B and 0.6B students}.
Placement controls show that correction-triggered routing outperforms both
equal-budget random and TV-weighted placement.
Fixed-prefix analysis further shows persistent teacher alignment, with larger
gains over random placement at higher initial student--teacher disagreement.
\par\medskip
\noindent\textbf{Code:}
\href{https://github.com/Miteto-sudo/SAKI}
{github.com/Miteto-sudo/SAKI}

\end{abstract}

% ==================================================================
% 1. INTRODUCTION
% ==================================================================

\section{Introduction}
\label{sec:introduction}

Knowledge distillation (KD) transfers capabilities from a strong teacher model
to a smaller student by matching its predictive behavior
~~\citep{bucila2006model,ba2014deep,hinton2015distilling,gou2021survey},
with sequence-level distillation extending this idea to autoregressive
generation ~\citep{kim2016sequence}.
For autoregressive language models, however, conventional distillation suffers
from a state-distribution mismatch:
the student is often trained on fixed or teacher-generated prefixes, while at
inference time it must condition on its own generations
~\citep{bengio2015scheduled,ross2011dagger}.
On-policy distillation (OPD) alleviates this mismatch by letting the student
generate its own trajectories and querying the teacher on states actually
visited by the student
~\citep{gu2024minillm,agarwal2024gkd}.
This is particularly attractive for reasoning distillation, where an early
generation decision can alter the entire subsequent trajectory and hence the
states on which later supervision is provided~\citep{hu2026ziprl}.

Yet OPD's defining strength also creates an important limitation.
\emph{The effectiveness of distillation depends not only on what supervision
the teacher provides, but also on the states at which that supervision is
queried.}
When the student is substantially weaker than the teacher, early mistakes can
compound and drive its rollout toward prefixes that have low probability under
the teacher's own behavior.
Although the teacher distribution remains well-defined at such prefixes, the
teacher is forced to continue from states that it would rarely generate under
its own policy.
The resulting conditional signal may therefore be less representative of the
reasoning behavior that makes the teacher strong and less directly useful for
capability transfer.

This suggests that improving OPD requires considering both
\emph{where} teacher supervision is queried and
\emph{how} the student is updated at the resulting positions~\citep{wang2026self}.

A natural approach is to guide the rollout distribution toward the teacher.
Directly replacing student rollouts with teacher trajectories, however,
introduces the opposite distribution shift:
the resulting states may be too far from the student's current behavior to
form an appropriate online learning distribution.
Trust-Region Behavior Blending (TRB)
~\citep{plyusov2026trb}
addresses this trade-off by constructing, at every prefix, an intermediate
behavior distribution that moves toward the teacher while remaining inside a
student-centered KL trust region.
Given student distribution $p_t$ and teacher distribution $T_t$, TRB
constructs a teacher-guided behavior distribution $q_t$ that moves toward
$T_t$ while satisfying a student-centered constraint
$D_{\mathrm{KL}}(q_t\Vert p_t)\leq\epsilon$.
TRB therefore addresses the \emph{behavior-side} question of where
supervision is queried while retaining the same per-prefix objective.
This leaves a complementary \emph{objective-side} question:
should positions where the guided behavior retains a student proposal receive
the same supervision as positions where it must override that proposal?

Reverse KL is student-weighted: tokens contribute in proportion to the
student distribution ~\citep{lin2026rest}.
Consequently, teacher-preferred tokens with low student support receive weak gradient updates under RKL, highlighting the need for targeted direct supervision at high-conflict positions.

Sampling $q_t$ through maximal coupling provides this distinction
automatically. At each position, the student proposes from $p_t$; the coupling
either retains the proposal or draws the residual correction required to
realize $q_t$. We reuse this realized accept/correction event as an endogenous
supervision router. Under maximal coupling,
$\Pr(C_t=1)=\TV(p_t,q_t)$, which is the minimum disagreement probability
among couplings of $p_t$ and $q_t$; the trust-region constraint further gives
$\Pr(C_t=1)\leq\sqrt{\epsilon/2}$.
Section~\ref{sec:coupling} formalizes these properties. We designate this framework SAKI (Supervision Allocation with KL-constrained Interpolation). 
Under this design, KL-constrained interpolation defines the target intermediate behavior 
distribution, while realized maximal-coupling events serve as an endogenous mechanism 
that adaptively routes token-level supervision.

This creates a natural distinction between two types of positions.
At an \emph{accepted position}, the student proposes behavior that remains
compatible with the teacher-guided rollout.
At a \emph{correction position}, the student's proposed behavior cannot be
retained and the intermediate policy must modify the trajectory.
We hypothesize that these two regimes need not receive identical supervision.

We therefore retain the ordinary sampled-token reverse-KL (RKL) signal at
accepted positions, while switching correction positions to direct
supervision on the teacher's highest-probability token.
Specifically, once maximal coupling identifies a correction, we query the
teacher at the same prefix and optimize the negative log-likelihood of its
Top-1 token.
This exposes the student to the teacher-preferred mode precisely at positions
where the guided rollout cannot retain the student proposal.
Because corrections occur with probability $\TV(p_t,q_t)$, the resulting
teacher supervision is sparse and follows a stochastic, conflict-adaptive
routing pattern determined by the rollout coupling itself rather than by an
external token-selection heuristic.

The residual correction token determines the subsequent rollout prefix,
whereas the teacher Top-1 token determines the local parameter update.
Thus, the same coupling process controls both trajectory construction and
supervision routing.

Exact teacher-guided rollout requires online teacher distributions.
We therefore use engine-resident speculative block verification to amortize
teacher inference while preserving the exact-$q$ trajectory distribution and
the coupling semantics required by our objective.

Our contributions are summarized as follows:
\begin{itemize}

\item
\textbf{Coupling-routed teacher-mode supervision.}
We reuse realized correction events as an endogenous supervision router:
accepted positions retain sampled-token RKL, while corrections receive
direct supervision on the teacher's Top-1 token.
Placement controls show gains over both count-matched random and
TV-weighted conflict-aware placement, while fixed-prefix analysis shows
persistent teacher support and larger gains over random placement at
higher student--teacher conflict.

\item
\textbf{Minimal-intervention teacher-guided rollout.}
We realize the TRB behavior policy through maximal coupling.
The correction probability is exactly
$\TV(p_t,q_t)$, the minimum possible intervention probability among
couplings with these marginals, and satisfies
$\Pr(C_t=1)\leq\sqrt{\epsilon/2}$ under the trust region.

\item
\textbf{Engine-resident exact-$q$ rollout.}
We implement maximal-coupling rollout with engine-resident speculative
block verification, including exact residual correction and
first-rejection commit/rollback.
The system preserves the exact-$q$ trajectory distribution and coupling
semantics while providing a $4.22\times$ matched-workload speedup over
the external-loop implementation.

\end{itemize}

% ==================================================================
% 2. RELATED WORK
% ==================================================================

\section{Related Work}
\label{sec:related_work}

\subsection{On-Policy Distillation and Teacher-Guided Rollouts}

Knowledge distillation has been studied across both strong-to-weak and emerging weak-to-strong settings ~\citep{hinton2015distilling,chen2026weak},
with sequence-level distillation extending the idea to autoregressive
generation
~\citep{kim2016sequence}.
For language models, fixed or teacher-generated trajectories create a mismatch
between prefixes observed during training and those encountered when the
student generates autonomously.
MiniLLM
~\citep{gu2024minillm}
studies reverse-KL distillation for language generation, while Generalized
Knowledge Distillation (GKD)
~\citep{agarwal2024gkd}
provides a framework for training students on their own generated outputs.
These approaches motivate modern OPD, in which teacher supervision is delivered
on states induced by the student's current policy.

On-policy training reduces train--test state mismatch, but it also makes the
training state distribution depend on the current student's quality.
TRB
~\citep{plyusov2026trb}
addresses this issue by replacing pure student rollout with a teacher-guided
behavior distribution constrained by a student-centered KL trust region,
while retaining the same reverse-KL objective at visited prefixes.
We adopt the same intermediate-policy construction, but additionally exploit
the accept/correction information exposed when that policy is realized through
maximal coupling.
Recent work has also explored fine-grained distillation objectives and token-level supervision, including selective KL objectives and reflective credit assignment ~\citep{xing2026tropd,wei2026amr}.
Our focus is on whether the guided rollout process itself can provide the
routing signal for such specialized supervision.

\subsection{Speculative Decoding and Distillation}

Speculative decoding accelerates autoregressive generation by allowing a draft
model to propose multiple tokens that are verified in parallel by a stronger
target model
~\citep{stern2018blockwise,leviathan2023speculative,chen2023accelerating}.
Accepted prefixes can be committed in blocks, reducing the number of expensive
serial target-model decoding steps while preserving the desired sampling
distribution.

Draft-OPD
~\citep{lei2026draftopd}
connects speculative verification with on-policy distillation and demonstrates
that verification outcomes can provide useful training structure in addition
to computational acceleration.
Our use of speculative execution serves a different role:
the student is the capability-distilled model itself, and speculative
verification is used to efficiently realize a separate teacher-guided
intermediate distribution $q$.
The resulting maximal-coupling decisions are then reused to route token-level
supervision.
Thus, speculative execution in our framework simultaneously amortizes
teacher inference and exposes the coupling events that connect trajectory
construction with the distillation objective.

% ==================================================================
% 3. METHOD
% ==================================================================

\section{Method}
\label{sec:method}

Figure~\ref{fig:method_overview} summarizes one rollout-and-update step.
At prefix $h_t$, the rollout student $p_t$ and teacher $T_t$ define the
trust-region behavior distribution $q_t$. Maximal coupling then realizes
$q_t$ while exposing an accept/correction event that determines the rollout
token and routes the local distillation objective.

\begin{figure}[t]
  \centering
  \includegraphics[width=\linewidth]{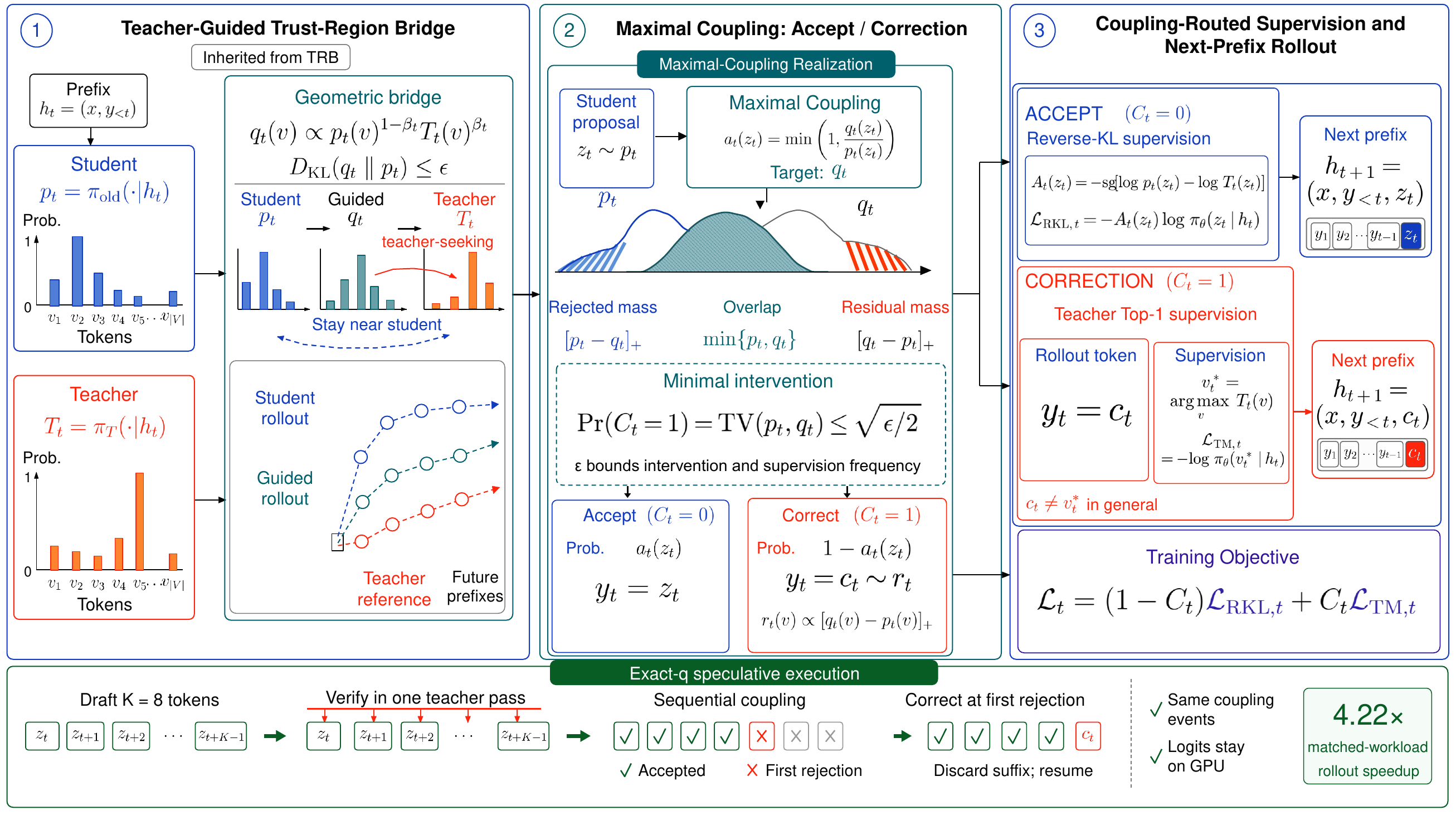}
    \caption{
    \textbf{Overview of SAKI and its exact-\(q\) execution.}
    The TRB bridge defines $q_t$; maximal coupling either retains a student
    proposal with RKL supervision or draws a residual rollout correction and
    routes teacher-Top-1 supervision. The residual token controls the next prefix,
    whereas the teacher mode controls the update. Bottom: an engine-resident
    $K$-token verifier commits proposals only through the first rejection,
    commits an exact residual correction there, and discards the invalid suffix.
}
  \label{fig:method_overview}
\end{figure}

\subsection{Problem Setup}

Let $h_t=(x,y_{<t})$ denote the prefix at decoding step $t$.
We write $p_t(v)=\pi_{\mathrm{old}}(v\mid h_t)$ for the frozen student
behavior distribution and $T_t(v)=\pi_T(v\mid h_t)$ for the fixed teacher
distribution.
The trainable student is denoted by $\pi_\theta$.

Standard student-rollout OPD samples $y_t\sim p_t$ and uses the detached
log-ratio advantage
\[
A_t
=
-\operatorname{sg}\!\left[\log p_t(y_t)-\log T_t(y_t)\right].
\]
Under exact student on-policy sampling, this gives the standard one-sample
score-function estimator associated with reverse KL
~\citep{williams1992reinforce};
Appendix~\ref{app:k1} provides the derivation.
In our teacher-guided rollout, we reuse the same sampled log-ratio
signal on student proposals retained by maximal coupling.
As shown in Section~\ref{sec:objective}, the resulting RKL
supervision acts on the student--guided overlap mass
$\min\{p_t,q_t\}$, while correction positions receive the
teacher-mode objective introduced below.

\subsection{Teacher-Guided Trust-Region Rollout}
\label{sec:bridge}

Following TRB~~\citep{plyusov2026trb}, we construct the
teacher-guided behavior policy by geometric interpolation
between the student and teacher:
\begin{equation}
q_{\beta,t}(v)
=
\frac{
p_t(v)^{1-\beta_t}
T_t(v)^{\beta_t}
}{
Z_t(\beta_t)
},
\qquad
Z_t(\beta_t)
=
\sum_u
p_t(u)^{1-\beta_t}
T_t(u)^{\beta_t}.
\label{eq:bridge}
\end{equation}
At every position, we choose the largest feasible $\beta_t\in[0,1]$ satisfying
\begin{equation}
D_{\mathrm{KL}}(q_{\beta,t}\Vert p_t)
\leq
\epsilon.
\label{eq:trust_region}
\end{equation}
Thus,
$\epsilon=0$ recovers $q_t=p_t$,
while a sufficiently large trust region permits $q_t=T_t$.
For completeness, Appendix~\ref{app:trb} outlines the derivation of this geometric bridge following TRB ~\citep{plyusov2026trb}.

\subsection{Maximal Coupling and Correction Events}
\label{sec:coupling}

To sample exactly from $q_t$ while exposing the relationship between student
and guided behavior, we use maximal coupling
~\citep{lindvall2002coupling}.

The student first proposes
\begin{equation}
z_t\sim p_t.
\end{equation}
The proposal is accepted with probability
\begin{equation}
a_t(z_t)
=
\min
\left(
1,
\frac{q_t(z_t)}{p_t(z_t)}
\right).
\label{eq:accept}
\end{equation}
If accepted,
\begin{equation}
y_t=z_t,
\qquad
C_t=0.
\end{equation}
Otherwise, we draw a correction token from the positive residual distribution
\begin{equation}
r_t(v)
=
\frac{
[q_t(v)-p_t(v)]_+
}{
\sum_u [q_t(u)-p_t(u)]_+
},
\label{eq:residual}
\end{equation}
and set
\begin{equation}
y_t=c_t\sim r_t,
\qquad
C_t=1.
\end{equation}

The following proposition gives the key distributional semantics of the
correction event.

\begin{proposition}[Distributional semantics of coupling corrections]
\label{prop:coupling}
For maximal coupling between distributions $p$ and $q$, the final output has
marginal distribution $q$, and
\begin{equation}
\Pr(C=1)
=
\TV(p,q)
=
1-\sum_v\min\{p(v),q(v)\}.
\label{eq:correction_tv}
\end{equation}
\end{proposition}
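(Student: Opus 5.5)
The plan is to compute the joint law of $(y,C)$ produced by the procedure of Section~\ref{sec:coupling}, separating the accept branch from the correction branch. Throughout, write $m(v)=\min\{p(v),q(v)\}$ for the overlap mass and $\alpha=\sum_v m(v)$. We also use the identities $[q(v)-p(v)]_+=q(v)-m(v)$ and $[p(v)-q(v)]_+=p(v)-m(v)$.

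\emph{Step 1: accept branch.} For a fixed token $v$ with $p(v)>0$, the probability that $z=v$ is proposed and then accepted is
\[
p(v)\min\!\left(1,\tfrac{q(v)}{p(v)}\right)=m(v).
\]
When $p(v)=0$, the token is never proposed, and $m(v)=0$ as well, so the identity still holds. Summing over $v$ gives $\Pr(C=0)=\alpha$, and therefore $\Pr(C=1)=1-\sum_v\min\{p(v),q(v)\}$.

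\emph{Step 2: identification with total variation.} Using $\TV(p,q)=\tfrac12\sum_v|p(v)-q(v)|$, we have
\[
\sum_v[q(v)-p(v)]_+=\sum_v[p(v)-q(v)]_+
\]
because both $p$ and $q$ sum to one. Each of these sums therefore equals $\TV(p,q)$. Moreover,
\[
\sum_v[q(v)-p(v)]_+=\sum_v\bigl(q(v)-m(v)\bigr)=1-\alpha,
\]
which gives the second equality in \eqref{eq:correction_tv}.

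\emph{Step 3: marginal of the output.} By the law of total probability,
\[
\Pr(y=v)=m(v)+\Pr(C=1)\,r(v)=m(v)+(1-\alpha)\,\frac{q(v)-m(v)}{1-\alpha}=q(v).
\]
This relies on the normalizer of $r$ in \eqref{eq:residual} being exactly $1-\alpha$, which Step 2 established.

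The only delicate point, and hence the main obstacle, is the degenerate case $p=q$. There $\alpha=1$, the residual normalizer vanishes, and $r$ is undefined. In this case the correction branch is reached with probability $0$, so the identities hold trivially and the formula for $r$ never needs to be evaluated.

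Finally, to justify calling the coupling \emph{maximal}, I would add a one-line remark. For any coupling $(Z,Y)$ with marginals $p$ and $q$,
\[
\Pr(Z=Y=v)\le m(v),
\]
so $\Pr(Z\neq Y)\ge 1-\alpha=\TV(p,q)$. The construction above attains this bound, since $C=1$ forces $y\neq z$: a correction token $c$ satisfies $q(c)>p(c)$, whereas a rejected proposal $z$ satisfies $q(z)<p(z)$.
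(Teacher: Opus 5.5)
Your proposal is correct and follows essentially the same route as the paper's proof. Both compute the accepted mass $\Pr(Z=v,C=0)=\min\{p(v),q(v)\}$, sum it to obtain $\Pr(C=1)$, identify the residual normalizer $\sum_u[q(u)-p(u)]_+$ with $\TV(p,q)$, and add the accept and correction branches to recover $q(v)$. Your extra care adds rigor the paper leaves implicit: you handle $p(v)=0$ and the degenerate case $p=q$, where the paper silently divides by $p(v)$ and $\TV(p,q)$. You also note that rejected proposals and correction tokens have disjoint supports, which is what guarantees that $C=1$ forces $Z\neq Y$.
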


The conditional rejected-proposal and residual-correction distributions are
derived in Appendix~\ref{app:maximal_coupling}.

\begin{proposition}[Minimal-intervention property]
\label{prop:minimal_intervention}
Let $\Gamma(p,q)$ denote the set of all couplings with marginals $p$ and $q$.
For any $\gamma\in\Gamma(p,q)$,
\begin{equation}
\Pr_{\gamma}(Z\neq Y)\geq\TV(p,q),
\end{equation}
with equality under maximal coupling.
Thus, maximal coupling realizes the guided marginal with the minimum possible
intervention probability.
\end{proposition}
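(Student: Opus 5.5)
The plan is the standard coupling-inequality argument, followed by an appeal to Proposition~\ref{prop:coupling} for the equality case. Fix an arbitrary coupling $\gamma\in\Gamma(p,q)$ of a pair $(Z,Y)$ on the (finite) vocabulary, with $Z\sim p$ and $Y\sim q$. I will lower-bound $\Pr_\gamma(Z\neq Y)$ through the diagonal mass $\Pr_\gamma(Z=Y)=\sum_v \gamma(v,v)$.

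The first step is a pointwise bound on the diagonal. For each token $v$, the event $\{Z=Y=v\}$ is contained in both $\{Z=v\}$ and $\{Y=v\}$. Since the marginals are $p$ and $q$, this gives $\gamma(v,v)\le p(v)$ and $\gamma(v,v)\le q(v)$, hence $\gamma(v,v)\le\min\{p(v),q(v)\}$.

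Summing over $v$ yields
\begin{equation*}
\Pr_\gamma(Z\neq Y)=1-\sum_v\gamma(v,v)\ \ge\ 1-\sum_v\min\{p(v),q(v)\}.
\end{equation*}
The right-hand side equals $\TV(p,q)$. To make this explicit, I will use the identity $\min\{a,b\}=a-[a-b]_+$ together with the fact that $\sum_v[p(v)-q(v)]_+=\tfrac12\sum_v|p(v)-q(v)|$. The latter holds because $p$ and $q$ have equal total mass, so positive and negative parts balance. Both facts are consistent with equation~\eqref{eq:correction_tv}.

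For the equality case, consider the maximal coupling of Section~\ref{sec:coupling}, with $Z=z_t$ the proposal and $Y=y_t$ the output. Proposition~\ref{prop:coupling} gives that $Y\sim q$ and $\Pr(C=1)=\TV(p,q)$, so this construction belongs to $\Gamma(p,q)$. It remains to identify $\{Z\neq Y\}$ with $\{C=1\}$, and this is the one step needing care. On $\{C=0\}$ we have $Y=Z$ by construction. On $\{C=1\}$ we must show $Y\neq Z$ almost surely.

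To do this, note that a rejection of $z$ has positive probability only when $q(z)<p(z)$, because $a(z)=1$ whenever $q(z)\ge p(z)$. The residual $r$ is supported on $\{v: q(v)>p(v)\}$. These two sets are disjoint, so $c\neq z$ whenever $C=1$. Consequently $\Pr(Z\neq Y)=\Pr(C=1)=\TV(p,q)$, and the lower bound is attained.

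A degenerate case must also be handled: if $p=q$, the residual normalizer is zero. In that case rejection never occurs, and both sides of the equality are $0$.

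I expect no serious obstacle. The main subtlety is this support-disjointness argument, which shows that a correction never accidentally resamples the rejected token; without it, one would only get $\Pr(Z\neq Y)\le\Pr(C=1)$.
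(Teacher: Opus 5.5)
Your proof is correct and follows the paper's argument. You sum the diagonal bound $\gamma(v,v)\le\min\{p(v),q(v)\}$ over $v$ to get the lower bound, and you attain equality with the maximal coupling of Proposition~\ref{prop:coupling}; your support-disjointness step (a rejection can only occur where $q<p$, while the residual is supported where $q>p$) makes explicit what the paper leaves implicit when it asserts $\Pr(Z=v,Y=v)=\min\{p(v),q(v)\}$ under maximal coupling.
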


\begin{corollary}[Trust-region control of intervention rate]
\label{cor:intervention_rate}
If the guided distribution satisfies
\begin{equation}
D_{\mathrm{KL}}(q_t\Vert p_t)\leq\epsilon,
\end{equation}
then the maximal-coupling correction probability satisfies
\begin{equation}
\Pr(C_t=1)
=
\TV(p_t,q_t)
\leq
\sqrt{
\frac{1}{2}
D_{\mathrm{KL}}(q_t\Vert p_t)
}
\leq
\sqrt{\frac{\epsilon}{2}}.
\label{eq:intervention_rate_bound}
\end{equation}
\end{corollary}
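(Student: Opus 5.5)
The corollary follows by chaining three facts. The first is the equality $\Pr(C_t=1)=\TV(p_t,q_t)$. The second is Pinsker's inequality. The third is monotonicity of the square root applied to the trust-region constraint.

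For the equality, I invoke Proposition~\ref{prop:coupling} at a fixed prefix $h_t$ with $p=p_t$ and $q=q_t$. Conditioning on $h_t$, the coupling step in Section~\ref{sec:coupling} is exactly the maximal coupling analysed there. Hence $\Pr(C_t=1\mid h_t)=1-\sum_v\min\{p_t(v),q_t(v)\}=\TV(p_t,q_t)$, where $\TV$ uses the convention $\TV(p,q)=\tfrac12\sum_v|p(v)-q(v)|$. These two expressions agree because $\sum_v\min\{p,q\}=1-\tfrac12\sum_v|p-q|$, which follows from $\min\{a,b\}=\tfrac12(a+b-|a-b|)$ and both distributions summing to one. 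I will state this explicitly, since Pinsker's constant depends on using the $\tfrac12\ell_1$ normalization.

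For the middle inequality, I apply Pinsker's inequality in the form $\TV(q,p)\le\sqrt{\tfrac12 D_{\mathrm{KL}}(q\Vert p)}$, together with the symmetry $\TV(p,q)=\TV(q,p)$. The direction of the KL divergence does not matter for Pinsker, but it does matter for matching the trust region, which is $D_{\mathrm{KL}}(q_t\Vert p_t)$. If a self-contained argument is wanted, I would sketch the standard proof. Let $A=\{v:q(v)>p(v)\}$ and use the data-processing inequality to reduce to the two-point distributions $\mathrm{Bern}(q(A))$ and $\mathrm{Bern}(p(A))$. Then verify the binary inequality $a\log\frac ab+(1-a)\log\frac{1-a}{1-b}\ge 2(a-b)^2$ by differentiating in $b$. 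This binary inequality is the only genuinely analytic step, so it is the main obstacle; I would simply cite it as a classical result.

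Finally, the hypothesis $D_{\mathrm{KL}}(q_t\Vert p_t)\le\epsilon$, which holds for the TRB choice of $\beta_t$ by \eqref{eq:trust_region}, combined with monotonicity of $x\mapsto\sqrt{x/2}$, yields the last bound $\sqrt{\epsilon/2}$. This holds for every prefix, so it also holds unconditionally after averaging over $h_t$.
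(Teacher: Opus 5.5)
Your proposal is correct and matches the paper's argument: the paper takes $\Pr(C_t=1)=\TV(p_t,q_t)$ from Proposition~\ref{prop:coupling}, then applies Pinsker's inequality together with the trust-region constraint, exactly as you do. Your extra care about the $\tfrac12\ell_1$ normalization of $\TV$ and the conditioning on $h_t$, and your sketch of Pinsker via the binary reduction, are sound elaborations rather than a different route.
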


Thus, $\epsilon$ controls not only the distributional distance of the rollout
from the student but also the maximum local probability that the rollout must
override a student proposal.

Appendix~\ref{app:correction_activity} reports the empirical correction
trajectory under the annealed trust-region schedule: the observed correction probability remains below the Pinsker upper bound and
vanishes when $\epsilon$ reaches zero.

The proof is given in Appendix~\ref{app:maximal_coupling}.
Proposition~\ref{prop:coupling} shows that a correction transfers the trajectory
from probability mass overrepresented by the student relative to $q$ toward
mass underrepresented by the student relative to $q$.
Hence, correction is not an arbitrary gating heuristic:
it has an explicit distributional meaning.

Unlike scalar conflict measures such as TV or KL, $C_t$ is a realized, proposal-dependent event: it identifies positions where the sampled student proposal cannot be retained under the minimum-intervention coupling that exactly realizes $q_t$.

For the geometric bridge, the positive residual is supported only on tokens
whose teacher-to-student likelihood ratio exceeds a prefix-dependent threshold;
Appendix~\ref{app:bridge_interpretation} gives the derivation.

\subsection{Coupling-Routed Supervision}
\label{sec:objective}

SAKI allocates supervision using the realized coupling
indicator $C_t$.
Accepted positions retain the sampled-token RKL update,
whereas correction positions receive teacher-mode supervision.

For a token $v$ at prefix $h_t$, define the sampled-token RKL loss
\begin{equation}
L_{\mathrm{RKL},t}(v)
=
-
A_t(v)
\log\pi_\theta(v\mid h_t),
\label{eq:k1_loss_token}
\end{equation}
where
\begin{equation}
A_t(v)
=
-
\operatorname{sg}
\left[
\log p_t(v)-\log T_t(v)
\right].
\label{eq:k1_advantage_token}
\end{equation}
At an accepted position, we evaluate this loss on the retained proposal
$v=z_t=y_t$.
We retain the ordinary RKL signal at accepted positions and replace
it with teacher-mode supervision when $C_t=1$.

\paragraph{Teacher-mode supervision.}
At a correction position, let
\begin{equation}
v_t^*
=
\operatorname*{arg\,max}_v T_t(v)
\end{equation}
denote the teacher's highest-probability token at the same prefix.
We replace the RKL term with
\begin{equation}
L_{\mathrm{TM},t}
=
-\log \pi_\theta(v_t^*\mid h_t).
\label{eq:teacher_mode_loss}
\end{equation}
Thus, correction detection and correction supervision play decoupled roles:
maximal coupling determines \emph{where} specialized supervision is activated,
while the teacher mode determines \emph{what} the student learns.
We adopt the deterministic teacher mode ($v_t^* = \operatorname*{arg\,max}_v T_t(v)$) rather than stochastic teacher sampling ($\tilde{v} \sim T_t$), as the mode provides a sharp, low-variance target precisely at high-conflict positions.
Empirically, teacher-mode supervision outperforms stochastic teacher sampling by $+0.62$ Mean@8 and $+1.56$ Pass@8 on the 1.7B student (Appendix~\ref{app:correction_supervision}, Table~\ref{tab:correction_supervision}), confirming the benefit of mode-seeking updates under coupling-detected divergence.

This design naturally decouples exploration steering from parameter optimization: 
the residual token $c_t$ steers the autoregressive prefix along the guided target 
distribution $q_t$, whereas the teacher mode $v_t^*$ provides a low-variance, 
deterministic anchor for parameter updates at divergence points.

Let $M_t\in\{0,1\}$ denote the valid response-token mask and let
$N_{\mathrm{valid}}=\sum_t M_t$.
Our coupling-aware objective is
\begin{equation}
L
=
\frac{1}{N_{\mathrm{valid}}}
\left[
\sum_t
M_t(1-C_t)L_{\mathrm{RKL},t}
+
\sum_t
M_tC_tL_{\mathrm{TM},t}
\right].
\label{eq:general_objective}
\end{equation}

At a fixed prefix, suppressing the time index for clarity,
Proposition~\ref{prop:coupling} gives
\begin{equation}
\mathbb{E}
\left[
(1-C)L_{\mathrm{RKL}}(Z)
\right]
=
\sum_v
\min\{p(v),q(v)\}
L_{\mathrm{RKL}}(v).
\label{eq:expected_overlap_k1}
\end{equation}
Thus, accepted-position RKL supervision acts exactly on the
student--guided overlap mass $\min\{p,q\}$.
Correction positions form the complementary branch of the same
maximal-coupling realization: at these positions, the sampled student
proposal cannot be retained while exactly realizing the guided rollout
distribution $q$.
We reuse this realized branch to switch from RKL to teacher-mode
supervision, without introducing an additional token-selection rule or
routing threshold.

In our main training schedule, this correction supervision is transient:
we anneal $\epsilon$ to zero so that the method eventually returns exactly
to student-rollout RKL training.
Section~\ref{sec:mechanism} analyzes the distributional effect of this
transition.

\subsection{Efficient Engine-Resident Exact-$q$ Rollout}
\label{sec:speculative}

Token-wise exact coupling requires both student and teacher distributions at
every generated position and is therefore substantially more expensive than
student-only OPD. We amortize this online computation with speculative block
verification. At each wave, the frozen rollout student drafts up to $K$
tokens from the current true prefix, and the teacher evaluates the
corresponding proposal prefixes in one batched verification. Inside the
inference engine, we construct $q_t$ for all block positions, solve the
trust-region coefficients $\beta_t$ in batch, and evaluate maximal-coupling
acceptance from left to right.

The engine commits the longest consecutively accepted proposal prefix. If the
first rejection occurs at position $j$, it samples
$c_j\propto[q_j-p_j]_+$, commits that correction, discards all later
speculative tokens and KV states, and resumes from the corrected prefix.
Student and teacher full-vocabulary logits remain on GPU; the training loop
receives only compact token-aligned metadata, including committed tokens,
the correction mask, selected log-probabilities, and teacher-mode targets.
Because proposal prefixes coincide with true prefixes up to the first
rejection and no post-correction speculative state is reused, this execution
has the same autoregressive law and coupling events as sequential exact-$q$
sampling. Appendix~\ref{app:spec_exact} proves exactness, and
Appendix~\ref{app:systems} gives implementation and validation details.

% ==================================================================
% 4. EXPERIMENTS
% ==================================================================

\section{Experiments}
\label{sec:experiments}

\subsection{Experimental Setup}

\paragraph{Models and shared protocol.}
We distill Qwen3-0.6B-Base and Qwen3-1.7B-Base students
~\citep{yang2025qwen3} from the same Qwen3-4B-Base-GRPO teacher on
DAPO-Math-17K~~\citep{yu2025dapo}. Within each student scale, all
methods use the same student and teacher checkpoints, training prompts,
200-step budget, rollout batch of 64 prompts with eight responses per prompt,
and student-only evaluation protocol. Complete optimization, sampling, and
model-compatibility details are in Appendix~\ref{app:training_details}.

\paragraph{Baselines.}
Where an RKL term is present, OPD, ExOPD, TRB, Random-TM,
TV-Weighted-TM, and Ours use the same sampled-token log-ratio update.
ExOPD~~\citep{yang2026gopd} follows the official G-OPD code with
$\lambda=1.25$,
\texttt{only\_reverse\_kl\_advantages=True}, and the initial student as the
fixed reference.
SKD~~\citep{xu2025speculative} is reproduced using its canonical Top-25 speculative
rollout and full-distribution KL objective under our unified training and
evaluation protocol.

Random-TM and TV-Weighted-TM are placement controls for the 1.7B student.
Both match the number of teacher-mode updates used by Ours on each
exact-$q$ trajectory. Random-TM uses count-matched random placement,
whereas TV-Weighted-TM samples positions without replacement according
to the local $\TV(p_t,q_t)$ score. For TV-Weighted-TM, the realized
correction mask determines only the per-trajectory supervision budget,
not the selected locations.

TRB, Random-TM, TV-Weighted-TM, and Ours share an exact-$q$ rollout with
$K=8$ and $\epsilon:0.02\rightarrow0$ linearly over the first 50 steps,
directly adopting the established schedule from
\citet{plyusov2026trb}.
To strictly isolate the algorithmic impact of supervision routing from confounding 
trajectory dynamics, all exact-$q$ variants share the identical rollout schedule 
established in prior work ~\citep{plyusov2026trb}. This strictly controlled protocol 
ensures that all empirical improvements are directly attributable to token-level 
supervision allocation rather than behavioral tuning artifacts.

\paragraph{Evaluation.}
We evaluate MATH-500~~\citep{hendrycks2021math,lightman2023lets}, HMMT-Feb26, AIME 2026,
AIME 2025, AMC 2023, Minerva Math~~\citep{lewkowycz2022minerva}, and
OlympiadBench~~\citep{he2024olympiadbench}. We sample eight responses per problem at temperature
1.0 with maximum response length 16,384. Mean@8 is the average correctness over
the eight samples, Pass@8 is the fraction of problems with at least one
correct sample, and Average is the unweighted mean over the seven displayed
benchmarks. Full decoding settings are in
Appendix~\ref{app:training_details}.

\subsection{Main Results}
\begin{table*}[t]
\centering
\renewcommand{\arraystretch}{0.88}
\caption{
\textbf{Mean@8 accuracy} (\%). Average is the mean over the seven
benchmarks; \textbf{bold} and \textbf{underline} denote the best and second-best trained method
within each student size, respectively.
}
\label{tab:opd-mean8}
\resizebox{\textwidth}{!}{%
\begin{tabular}{lrrrrrrrr}
\toprule
Method & MATH-500 & HMMT & AIME26 & AIME25 & AMC23 & Minerva & Olympiad & Average \\
\midrule
\textit{Teacher (4B)}
    & 84.1 & 15.2 & 19.2 & 19.6 & 60.9 & 38.0 & 51.9 & 41.3 \\
\midrule

Student (1.7B)
    & 10.2 & 0.0 & 0.4 & 0.4 & 3.8 & 3.1 & 3.2 & 3.0 \\
OPD
    & 70.7 & 2.3 & 6.3 & 8.3 & 42.8 & 27.2 & 33.7 & 27.3 \\
ExOPD
    & 69.5 & 2.7 & 5.4 & 7.9 & 44.4 & 27.5 & 32.1 & 27.1 \\
SKD
    & 64.3 & 3.4 & 5.4 & 4.6 & 40.9 & 23.9 & 29.1 & 24.5 \\
TRB
    & 70.7 & 4.9 & 6.3 & 8.8 & 43.1 & 26.5 & 34.7 & \underline{27.9} \\
\textbf{SAKI (Ours)}
    & 70.9 & 5.7 & 7.9 & 8.3 & 47.2 & 28.1 & 34.9 & \textbf{29.0} \\
\midrule

Student (0.6B)
    & 6.2 & 0.4 & 0.0 & 0.0 & 0.9 & 2.2 & 2.1 & 1.7 \\
OPD
    & 52.7 & 0.8 & 0.4 & 2.5 & 26.9 & 15.4 & 20.7 & 17.1 \\
ExOPD
    & 52.1 & 1.5 & 0.8 & 1.7 & 31.3 & 15.9 & 20.8 & \underline{17.7} \\
SKD
    & 41.7 & 1.1 & 0.4 & 0.4 & 19.1 & 11.0 & 13.3 & 12.4 \\
TRB
    & 52.0 & 1.5 & 0.4 & 2.5 & 28.4 & 14.9 & 20.9 & 17.2 \\
\textbf{SAKI (Ours)}
    & 52.6 & 1.9 & 1.7 & 4.2 & 31.9 & 15.3 & 21.2 & \textbf{18.4} \\
\bottomrule
\end{tabular}%
}
\end{table*}

\begin{table*}[t]
\centering
\renewcommand{\arraystretch}{0.88}
\caption{
\textbf{Pass@8 accuracy} (\%). Average is the mean over the seven
benchmarks; \textbf{bold} and \textbf{underline} denote the best and second-best trained method
within each student size, respectively.
}
\label{tab:opd-pass8}
\resizebox{\textwidth}{!}{%
\begin{tabular}{lrrrrrrrr}
\toprule
Method & MATH-500 & HMMT & AIME26 & AIME25 & AMC23 & Minerva & Olympiad & Average \\
\midrule
\textit{Teacher (4B)}
    & 94.4 & 24.2 & 30.0 & 26.7 & 97.5 & 57.4 & 70.2 & 57.2 \\
\midrule

Student (1.7B)
    & 49.6 & 0.0 & 3.3 & 3.3 & 20.0 & 19.9 & 18.4 & 16.4 \\
OPD
    & 90.0 & 9.1 & 20.0 & 20.0 & 72.5 & 46.3 & 57.2 & \underline{45.0} \\
ExOPD
    & 87.6 & 6.1 & 20.0 & 20.0 & 75.0 & 45.6 & 56.7 & 44.4 \\
SKD
    & 86.6 & 6.1 & 16.7 & 13.3 & 67.5 & 44.5 & 53.6 & 41.2 \\
TRB
    & 90.4 & 9.1 & 16.7 & 23.3 & 70.0 & 44.9 & 57.9 & 44.6 \\
\textbf{SAKI (Ours)}
    & 90.0 & 18.2 & 20.0 & 23.3 & 75.0 & 47.4 & 58.5 & \textbf{47.5} \\
\midrule

Student (0.6B)
    & 35.8 & 3.0 & 0.0 & 0.0 & 7.5 & 13.2 & 12.9 & 10.3 \\
OPD
    & 78.2 & 3.0 & 3.3 & 6.7 & 52.5 & 34.2 & 43.6 & 31.6 \\
ExOPD
    & 77.2 & 6.1 & 6.7 & 6.7 & 57.5 & 36.0 & 41.9 & 33.2 \\
SKD
    & 71.0 & 6.1 & 3.3 & 3.3 & 57.5 & 31.3 & 34.4 & 29.6 \\
TRB
    & 77.4 & 9.1 & 3.3 & 6.7 & 60.0 & 34.9 & 43.9 & \underline{33.6} \\
\textbf{SAKI (Ours)}
    & 81.2 & 9.1 & 10.0 & 13.3 & 60.0 & 32.0 & 43.3 & \textbf{35.6} \\
\bottomrule
\end{tabular}%
}
\end{table*}

\paragraph{Performance summary.}
Tables~\ref{tab:opd-mean8} and~\ref{tab:opd-pass8} show that Ours achieves
the best macro Mean@8 and Pass@8 at both student scales. Relative to TRB, the
1.7B student improves from 27.9/44.6 to 29.0/47.5
($+1.1/+2.9$ points), and the 0.6B student improves from 17.2/33.6 to
18.4/35.6 ($+1.2/+2.0$ points). Relative to SKD, the gains are
$+4.5/+6.3$ points for 1.7B and $+6.0/+6.0$ points for 0.6B.
Mean@8 improves over TRB in 13 of 14 student--benchmark pairs.

\paragraph{Placement controls.}
For the 1.7B student, Random-TM reaches 28.30 Mean@8 and 45.50 Pass@8.
TV-Weighted-TM matches the same per-trajectory teacher-mode budget but
weights placement by the local $\TV(p_t,q_t)$ score, reaching
28.24 Mean@8 and 46.77 Pass@8.
Ours reaches 29.00/47.50, exceeding TV-Weighted-TM by
$+0.76$ Mean@8 and $+0.73$ Pass@8.
Thus, scalar conflict-aware placement is useful, while realized coupling
corrections provide a stronger routing signal under the matched setup.
Aggregate values are reported in
Table~\ref{tab:placement_control};
Figure~\ref{fig:main_results_summary}(c) visualizes the Random-TM
decomposition.

\subsection{Mechanistic Analysis:
Persistent Alignment and Conflict-Adaptive Teacher Support}
\label{sec:mechanism}

\paragraph{Matched fixed-prefix probe.}
We compare matched TRB, Random-TM, and Ours runs on a frozen probe of
2,048 positions from 213 prompts. We track student probability on the
teacher Top-1 token ($C_1$) and Top-16 set ($C_{16}$), and stratify
positions by initial $D_{\mathrm{KL}}(T\Vert S)$. Differences are
prompt-equal, and confidence intervals use prompt-clustered bootstrap.
Correction-triggered supervision ends at step 51; full probe construction
and statistics are in Appendix~\ref{app:mechanism_details}.

\begin{figure*}[t]
\centering
\includegraphics[width=0.98\textwidth]
{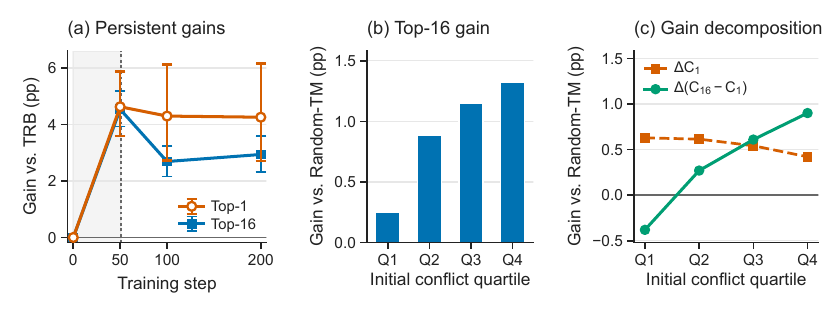}
\caption{
\textbf{Persistent and conflict-adaptive teacher support.}
$C_1$ is student probability on the teacher Top-1 token, and $C_{16}$ is
student probability mass on the teacher Top-16 set.
(a) Gains over TRB during training; shading marks the correction-active phase
and the dotted line marks step 51. Error bars are 95\% prompt-clustered
bootstrap CIs.
(b,c) Step-200 gains over Random-TM by initial
$D_{\mathrm{KL}}(T\Vert S)$ quartile (Q1 lowest, Q4 highest).
The Q4--Q1 contrasts are $+1.07$ pp for $\Delta C_{16}$ and
$+1.28$ pp for $\Delta(C_{16}-C_1)$.
}
\label{fig:mechanism_alignment}
\end{figure*}

\paragraph{Teacher-mode supervision establishes persistent alignment.}
Figure~\ref{fig:mechanism_alignment}(a) shows that correction-triggered
teacher-mode supervision rapidly increases teacher-supported probability mass
during the correction-active phase.
At step 50, Ours exceeds TRB by $4.63$ percentage points in $C_1$ and
$4.54$ points in $C_{16}$.
After teacher-mode supervision is disabled at step 51, these gains remain
visible: at step 200, after 149 RKL-only steps, the corresponding advantages are
still $4.26$ and $2.94$ points.
Additional divergence and entropy statistics are reported in
Appendix~\ref{app:mechanism_details}.

\paragraph{Coupling-based placement is conflict adaptive.}
The equal-budget Random-TM control tests whether correction-based placement
provides benefits beyond teacher-mode supervision alone.
Relative to Random-TM, correction-triggered placement increases $C_1$ by
$0.648$ points and $C_{16}$ by $0.816$ points at step 200.
More importantly, Figure~\ref{fig:mechanism_alignment}(b,c) shows that the
$C_{16}$ advantage grows from $+0.25$ points in the lowest-conflict quartile
to $+1.32$ points in the highest-conflict quartile
(Q4--Q1: $+1.07$ points; 95\% CI: [0.87, 1.29]).
This increase is primarily carried by teacher-supported non-argmax tokens
(Q4--Q1: $+1.28$ points), while the corresponding $C_1$ interaction is not
significant.
These results show that maximal-coupling corrections provide an effective conflict-adaptive routing signal beyond the effect of supervision budget alone.

\subsection{Exact-$q$ Rollout Efficiency}
\label{sec:rollout_efficiency}

We measure committed response tokens per generation wall-clock second under
the same 64-prompt $\times$ 8-response, maximum-length 7,168, $K=8$
workload. The engine-resident backend achieves 3,276 tokens/s compared to 776 tokens/s 
for the external-loop implementation—delivering a $4.22\times$ speedup under 
identical workloads. Remarkably, despite executing full online teacher verification 
and batched trust-region interpolation at every wave, the engine sustains over 
42\% of the theoretical throughput ceiling defined by unguided student-only generation 
(Table~\ref{tab:q_engine_throughput}).
Appendix~\ref{app:systems} reports timing breakdowns, model/cache lifecycle,
and correctness checks.

\begin{table}[t]
\centering
\small
\caption{\textbf{Exact-$q$ rollout} throughput under the matched workload above.}
\label{tab:q_engine_throughput}
\begin{tabular}{lrr}
\toprule
Backend & Tok/s & Relative \\
\midrule
Student-only reference       & 7,760 & -- \\
External-loop exact-$q$ K8   &   776 & 1.00$\times$ \\
Engine-resident exact-$q$ K8 & 3,276 & \textbf{4.22$\times$} \\
\bottomrule
\end{tabular}
\end{table}

\section{Conclusion}
\label{sec:conclusion}

We introduce SAKI, which realizes a teacher-guided
trust-region policy through maximal coupling and routes teacher-mode
supervision at correction events. It achieves the best macro Mean@8 and
Pass@8 at both student scales. Placement controls and fixed-prefix
analysis support correction-based, conflict-adaptive routing. An
engine-resident exact-$q$ verifier preserves sampling semantics while
providing a $4.22\times$ matched-workload speedup.

\subsection*{AI use statement}

In this work, generative AI tools were used to assist with checking and
refining mathematical proofs, code development, and language editing of
the manuscript. All mathematical arguments were independently verified
by the authors. AI-assisted code was reviewed and tested by the authors
for correctness, and all AI-assisted textual edits were reviewed before
inclusion in the manuscript. The authors take full responsibility for the
final content of this work, including text, claims, code, and other
artifacts produced with the aid of generative AI.

\subsection*{Reproducibility statement}

We provide supplementary code containing a reference implementation
of the core components of SAKI.
Appendix~\ref{app:training_details} reports the training and evaluation
configuration, Appendices~\ref{app:maximal_coupling} and
\ref{app:spec_exact} provide the coupling and exact-execution derivations,
and Appendix~\ref{app:systems} describes the systems implementation and
validation details.

% ==================================================================
% REFERENCES
% ==================================================================

\bibliography{iclr2027_conference}
\bibliographystyle{iclr2027_conference}

% ==================================================================
% APPENDIX
% ==================================================================

\appendix

% ==================================================================
% A. TRB BRIDGE DERIVATION
% ==================================================================

\section{Teacher Construction, Model Compatibility, and Training Details}
\label{app:training_details}

\paragraph{Teacher construction.}
We initialize the teacher from Qwen3-4B-Base and reproduce the
Qwen3-4B-Base-GRPO construction of
\citet{li2026rethinking}.
We train the teacher with GRPO on the processed DAPO-Math-17K dataset~\citep{yang2026your}.
Following their data format, each question is augmented with the instruction
``Please reason step by step, and put your final answer within
\texttt{\textbackslash boxed\{\}}.''
We use exactly the reported GRPO hyperparameter recipe, summarized in
Table~\ref{tab:teacher_grpo_config}.

\begin{table}[t]
\centering
\small
\caption{
\textbf{Training configuration} used to construct the \textbf{Qwen3-4B-Base-GRPO teacher},
following \citet{li2026rethinking}.
}
\label{tab:teacher_grpo_config}
\begin{tabular}{@{}p{0.42\linewidth}p{0.50\linewidth}@{}}
\toprule
Setting & Value \\
\midrule
Initialization & Qwen3-4B-Base \\
RL algorithm & GRPO \\
Training data & Processed DAPO-Math-17K \\
Training epochs & 1 \\
Prompt batch size & 64 \\
Reported micro-batch size & 64 \\
Responses per prompt & 8 \\
Maximum prompt length & 1,024 \\
Maximum response length & 7,168 \\
Learning rate & $1\times10^{-6}$ \\
Sampling temperature & 1.0 \\
Top-$p$ & 1.0 \\
Repetition penalty & 1.0 \\
KL regularization & None \\
Additional KL coefficient & 0 \\
Loss aggregation & Token mean \\
Thinking template & Disabled \\
\bottomrule
\end{tabular}
\end{table}

The teacher uses the released rule-based mathematical reward function rather
than a learned reward model ~\citep{liu2026cdrrm}.
Format reward and reference-model KL regularization are disabled.

\paragraph{Model compatibility.}
Our rollout and maximal-coupling implementation operate at the exact token
level.
The publicly released Qwen3 Base and post-trained checkpoints use different
default termination-token conventions.
We therefore use Base-to-Base teacher--student pairs so that vocabulary,
termination behavior, and response masking remain naturally aligned.
All prompts are formatted with \texttt{enable\_thinking=False}.

\paragraph{Matched student-training protocol.}
The 0.6B and 1.7B students use the same training protocol; the student
parameter count is the primary experimental difference.
Both models are trained on the same 17,917 DAPO-Math prompts for 200
optimization steps.
Table~\ref{tab:student_training_config} reports the shared configuration.

\begin{table}[t]
\centering
\small
\caption{
\textbf{Shared training configuration} for the 0.6B and 1.7B students.
}
\label{tab:student_training_config}
\begin{tabular}{@{}ll@{}}
\toprule
Setting & Value \\
\midrule
Students
    & Qwen3-0.6B / 1.7B Base \\
Teacher
    & Qwen3-4B-Base-GRPO \\
Training
    & 200 steps; AdamW; LR $1\times10^{-6}$ \\
Weight decay / grad clip
    & $0.01$ / $1.0$ \\
Rollout batch
    & $64$ prompts $\times$ $8$ responses $=512$ \\
Sampling
    & $T=1.0$, top-$p=1.0$, no top-$k$ \\
Maximum lengths
    & 7,168 \\
Loss aggregation
    & Token mean over valid response tokens \\
Guided rollout(Ours)
    & $K=8$, $\epsilon:0.02\rightarrow0$ over 50 steps \\
\bottomrule
\end{tabular}
\end{table}

\section{Derivation of the Trust-Region Geometric Bridge}
\label{app:trb}

This section reproduces the geometric bridge underlying TRB
~\citep{plyusov2026trb}
for completeness.

At a fixed prefix, consider the problem
\begin{equation}
\begin{aligned}
\min_{q}
\quad&
D_{\mathrm{KL}}(q\Vert T)
\\
\text{s.t.}
\quad&
D_{\mathrm{KL}}(q\Vert p)\leq\epsilon,
\\
&
\sum_v q(v)=1.
\end{aligned}
\label{eq:trb_optimization}
\end{equation}
Its Lagrangian is
\begin{equation}
\mathcal{L}(q,\eta,\xi)
=
D_{\mathrm{KL}}(q\Vert T)
+
\eta
\left(
D_{\mathrm{KL}}(q\Vert p)-\epsilon
\right)
+
\xi
\left(
\sum_vq(v)-1
\right),
\end{equation}
where $\eta\geq0$.

Expanding the KL terms gives
\begin{align}
\mathcal{L}
=&
\sum_v
q(v)
\left[
\log q(v)-\log T(v)
\right]
\nonumber\\
&
+
\eta
\sum_v
q(v)
\left[
\log q(v)-\log p(v)
\right]
+
\text{const}.
\end{align}
Taking the derivative with respect to $q(v)$,
\begin{equation}
(1+\eta)\log q(v)
-
\log T(v)
-
\eta\log p(v)
+
\text{const}
=
0.
\end{equation}
Hence,
\begin{equation}
q(v)
\propto
p(v)^{\frac{\eta}{1+\eta}}
T(v)^{\frac{1}{1+\eta}}.
\end{equation}
Defining
\begin{equation}
\beta
=
\frac{1}{1+\eta}
\in(0,1],
\end{equation}
we obtain
\begin{equation}
q_\beta(v)
=
\frac{
p(v)^{1-\beta}
T(v)^\beta
}{
\sum_u
p(u)^{1-\beta}
T(u)^\beta
}.
\end{equation}
If the teacher itself lies inside the trust region, the constraint is inactive,
$\eta=0$, and $\beta=1$, giving $q=T$.
As the trust-region radius approaches zero, $\eta\rightarrow\infty$,
$\beta\rightarrow0$, and $q\rightarrow p$.

% ==================================================================
% B. K1 / RKL
% ==================================================================

\section{Single-Sample Estimation of Reverse KL}
\label{app:k1}

At a fixed prefix, the reverse KL from student $p_\theta$ to teacher $T$ is
\begin{equation}
D_{\mathrm{KL}}(p_\theta\Vert T)
=
\sum_v
p_\theta(v)
\left[
\log p_\theta(v)-\log T(v)
\right].
\label{eq:rkl_appendix}
\end{equation}

For a sample $Y\sim p_\theta$, define
\begin{equation}
k_1(Y)
=
\log p_\theta(Y)-\log T(Y).
\label{eq:k1_value}
\end{equation}
Taking the expectation under the student distribution gives
\begin{align}
\mathbb{E}_{Y\sim p_\theta}[k_1(Y)]
&=
\sum_v
p_\theta(v)
\left[
\log p_\theta(v)-\log T(v)
\right]
\\
&=
D_{\mathrm{KL}}(p_\theta\Vert T).
\label{eq:k1_value_expectation}
\end{align}
Thus, the sampled log-ratio is a one-sample Monte Carlo estimator of the
full-vocabulary reverse-KL value under exact student sampling.

Since the teacher is fixed,
\begin{align}
\nabla_\theta
D_{\mathrm{KL}}(p_\theta\Vert T)
=&
\sum_v
\nabla_\theta p_\theta(v)
\left[
\log p_\theta(v)-\log T(v)
\right]
\nonumber\\
&
+
\sum_v
p_\theta(v)
\nabla_\theta\log p_\theta(v).
\end{align}
Using
\begin{equation}
\nabla_\theta p_\theta(v)
=
p_\theta(v)
\nabla_\theta\log p_\theta(v)
\end{equation}
and the score-function identity
\begin{equation}
\mathbb{E}_{v\sim p_\theta}
\left[
\nabla_\theta\log p_\theta(v)
\right]
=
0,
\end{equation}
we obtain
\begin{equation}
\nabla_\theta
D_{\mathrm{KL}}(p_\theta\Vert T)
=
\mathbb{E}_{v\sim p_\theta}
\left[
\left(
\log p_\theta(v)-\log T(v)
\right)
\nabla_\theta\log p_\theta(v)
\right].
\label{eq:rkl_gradient}
\end{equation}
Therefore, the same sampled log-ratio provides both a Monte Carlo estimator of the
reverse-KL value in Eq.~\ref{eq:k1_value_expectation} and the coefficient of
an unbiased score-function estimator of its gradient under exact student
sampling.

Equivalently, defining a detached advantage
\begin{equation}
A(y)
=
-
\operatorname{sg}
\left[
k_1(y)
\right],
\end{equation}
the policy-gradient loss
\begin{equation}
L_{\mathrm{PG}}
=
-A(y)\log p_\theta(y)
\end{equation}
has gradient
\begin{equation}
\nabla_\theta L_{\mathrm{PG}}
=
k_1(y)
\nabla_\theta\log p_\theta(y),
\end{equation}
matching the one-sample gradient estimator in
Eq.~\ref{eq:rkl_gradient}.

\begin{remark}[Characterization on Guided Rollouts]
Under teacher-guided rollout ($Y \sim q_t$), evaluating the log-ratio update 
exclusively on accepted proposals corresponds to a localized score-function estimator 
supported exactly on the student--guided intersection measure $\min\{p_t, q_t\}$ 
(Proposition~\ref{prop:coupling}). This formulation restricts policy-gradient updates 
strictly to the subspace where student generations remain compatible with the guided policy, 
while delegating diverging states to the direct teacher-mode target $L_{\mathrm{TM}}$.
\end{remark}

\section{Additional Placement-Control Results}
\label{app:placement_control}

We compare two supervision-placement controls under the same exact-$q$
rollout. Random-TM matches the number of teacher-mode updates per
trajectory used by Ours while placing them at count-matched random
locations.

For TV-Weighted-TM, let $V_i$ denote the valid response positions of
trajectory $i$ and define
\[
m_i=\sum_{t\in V_i} C_{i,t},
\qquad
d_{i,t}=\TV(p_{i,t},q_{i,t}).
\]
We select $m_i$ valid positions without replacement using $d_{i,t}$ as
the sampling weight. Selected positions receive the same teacher-mode
loss as Ours, while the remaining valid positions retain RKL
supervision. The correction mask is used only to determine $m_i$ and
does not determine the selected locations. The exact-$q$ rollout and
maximal-coupling process are unchanged.

We implement the weighted sampling with exponential-race keys
$-\log U_{i,t}/d_{i,t}$ and select the $m_i$ smallest keys. If fewer
than $m_i$ valid positions have positive TV, the remaining positions
are filled uniformly.

\begin{table}[t]
\centering
\caption{
\textbf{Matched supervision-placement controls} for the 1.7B student on the same
seven-benchmark suite. Random-TM controls for the supervision budget,
while TV-Weighted-TM additionally favors positions with larger local
student--guided disagreement.
}
\label{tab:placement_control}
\begin{tabular}{lrr}
\toprule
Method & Mean@8 & Pass@8 \\
\midrule
TRB            & 27.90 & 44.60 \\
Random-TM      & 28.30 & 45.50 \\
TV-Weighted-TM & 28.24 & 46.77 \\
Ours           & \textbf{29.00} & \textbf{47.50} \\
\bottomrule
\end{tabular}
\end{table}

\begin{figure*}[t]
\centering
\includegraphics[width=0.98\textwidth]
{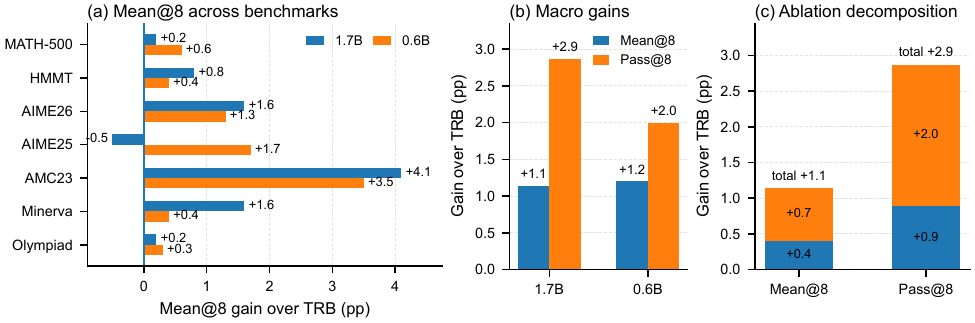}
\caption{
\textbf{Performance gains and supervision-placement ablation.}
(a) \textbf{Mean@8 gains over the matched TRB baseline across seven benchmarks}.
Our method improves six of seven benchmarks for the 1.7B student and all
seven for the 0.6B student.
(b) \textbf{Macro Mean@8 and Pass@8 improve at both student scales}.
(c) The \textbf{lower segment} shows the gain obtained by applying the same
number of teacher-mode updates at randomly selected positions
(TRB $\rightarrow$ Random-TM), while the \textbf{upper segment} shows the additional
gain from placing those updates at maximal-coupling corrections
(Random-TM $\rightarrow$ Ours).TV-Weighted-TM is reported separately in
Table~\ref{tab:placement_control}.
}
\label{fig:main_results_summary}
\end{figure*}

\section{Additional Fixed-Prefix Results}
\label{app:mechanism_details}

\paragraph{Matched fixed-prefix probe.}
We compare three matched 1.7B runs:
TRB, our correction-triggered teacher-mode objective, and a count-matched
random-placement control.
All three use the same Qwen3-1.7B-Base initialization, Qwen3-4B GRPO teacher,
exact-$q$ rollout, speculative block size $K=8$, annealing schedule,
training horizon, and seed.
TRB retains RKL supervision at all valid positions.
Our method replaces RKL with teacher-mode supervision at maximal-coupling
corrections.

Using the initial student with $\epsilon=0.02$, we construct a fixed probe
containing 1,024 genuine maximal-coupling correction positions and 1,024
matched accepted positions.
The fixed probe contains 2,048 positions spanning 213 prompts.
Prefixes, response positions, teacher distributions, and initial conflict
statistics are frozen throughout the analysis.

Let $v_T^*(h)$ denote the teacher's Top-1 token and let
$\mathcal{S}_{16}^{T}(h)$ denote its Top-16 token set.
For student distribution $S(\cdot\mid h)$, we measure
\[
C_1(h)=S(v_T^*(h)\mid h),
\qquad
C_{16}(h)=\sum_{v\in\mathcal{S}_{16}^{T}(h)}S(v\mid h).
\]
Thus, $C_{16}-C_1$ measures student probability on teacher-supported
Top-16 alternatives excluding the teacher argmax.

We first average metrics within each prompt and then compute paired
method differences, giving each prompt equal weight.
Confidence intervals use 10,000 paired bootstrap resamples clustered by
prompt ~\citep{efron1994bootstrap}.

To analyze heterogeneous placement effects, we rank fixed positions by their
initial $D_{\mathrm{KL}}(T\Vert S)$ and partition them into four equal-sized
conflict quartiles, from Q1 (lowest conflict) to Q4 (highest conflict).

The trust-region radius reaches exactly zero at step 51, after which
$q_t=p_t$ and no correction-triggered teacher-mode updates occur.

Table~\ref{tab:mechanism_persistence} reports the complete fixed-prefix
statistics underlying Figure~\ref{fig:mechanism_alignment}(a).
The reduction in $D_{\mathrm{KL}}(T\Vert S)$ persists after correction-triggered
teacher-mode supervision is disabled at step 51, while the accompanying
entropy reduction indicates that the student distribution becomes more
concentrated around teacher-supported mass.
\begin{table}[H]
\centering
\caption{
\textbf{Prompt-equal paired differences} between correction-triggered
teacher-mode supervision and the matched TRB control.
}
\label{tab:mechanism_persistence}
\begin{tabular}{lrrrr}
\toprule
Step
& $\Delta C_1$
& $\Delta C_{16}$
& $\Delta D_{\mathrm{KL}}(T\Vert S)$
& $\Delta H(S)$ \\
\midrule
50  & $+4.63$ pp & $+4.54$ pp & $-0.133$ & $-0.392$ \\
100 & $+4.30$ pp & $+2.69$ pp & $-0.154$ & $-0.249$ \\
200 & $+4.26$ pp & $+2.94$ pp & $-0.196$ & $-0.269$ \\
\bottomrule
\end{tabular}
\end{table}

\FloatBarrier
% ==================================================================
% C. TEACHER-MODE SUPERVISION
% ==================================================================

\section{Correction-Position Teacher Supervision}
\label{app:correction_supervision}

\paragraph{Teacher-mode supervision.}
At a correction prefix, let
\[
v^*=\operatorname*{arg\,max}_v T(v).
\]
Our teacher-mode loss is
\[
L_{\mathrm{TM}}
=
-\log p_\theta(v^*).
\]

Equivalently, because the teacher Top-1 distribution is the point mass
$\delta_{v^*}$,
\[
L_{\mathrm{TM}}
=
D_{\mathrm{KL}}(\delta_{v^*}\Vert p_\theta).
\]
Thus, teacher-mode supervision can be viewed as forward KL from a degenerate
teacher Top-1 target, while its implementation is simply hard-label
negative log-likelihood on the teacher argmax token.

\paragraph{Teacher-sampled supervision.}
We additionally consider a stochastic correction objective that keeps the
same maximal-coupling routing rule but samples the supervision target from
the teacher distribution. At a correction prefix, we draw
\[
\tilde v \sim T
\]
and optimize
\[
L_{\mathrm{TS}}
=
-\log p_\theta(\tilde v).
\]

In expectation over $\tilde v\sim T$,
\[
\mathbb{E}_{\tilde v\sim T}[L_{\mathrm{TS}}]
=
-\sum_v T(v)\log p_\theta(v),
\]
so this stochastic objective has the same expected gradient as
$D_{\mathrm{KL}}(T\Vert p_\theta)$.

\begin{table}[H]
\centering
\caption{
\textbf{Correction-supervision ablation} for the 1.7B student.
The coupling-aware variants use the same maximal-coupling routing rule and
trust-region schedule, differing only in the supervision target used at
correction positions.
}
\label{tab:correction_supervision}
\begin{tabular}{lrr}
\toprule
Correction supervision & Mean@8 & Pass@8 \\
\midrule
TRB & 27.90 & 44.60 \\
Teacher-sampled     & 28.38 & 45.94 \\
Teacher-mode (\textbf{SAKI}) & \textbf{29.00} & \textbf{47.50} \\
\bottomrule
\end{tabular}
\end{table}

Teacher-sampled supervision improves over the matched TRB baseline,
showing that coupling corrections are useful locations for direct
teacher-directed supervision beyond the particular choice of a hard
teacher-mode target. Teacher-mode supervision performs better in this
setting, improving over teacher-sampled supervision by 0.62 Mean@8 and
1.56 Pass@8 points. We therefore use teacher-mode supervision in the main
method.

% ==================================================================
% D. MAXIMAL COUPLING PROOF
% ==================================================================

\section{Proof of Proposition~\ref{prop:coupling}}
\label{app:maximal_coupling}

Let
\begin{equation}
Z\sim p
\end{equation}
be the student proposal.
Given $Z=v$, it is accepted with probability
\begin{equation}
a(v)
=
\min
\left(
1,
\frac{q(v)}{p(v)}
\right).
\end{equation}
Therefore,
\begin{equation}
\Pr(Z=v,C=0)
=
p(v)a(v)
=
\min\{p(v),q(v)\}.
\label{eq:accepted_mass}
\end{equation}
Summing over $v$,
\begin{equation}
\Pr(C=0)
=
\sum_v\min\{p(v),q(v)\}.
\end{equation}
Hence,
\begin{align}
\Pr(C=1)
&=
1-\sum_v\min\{p(v),q(v)\}
\\
&=
\frac{1}{2}
\sum_v
|p(v)-q(v)|
\\
&=
\TV(p,q).
\end{align}

For the rejected proposal,
\begin{align}
\Pr(Z=v,C=1)
&=
p(v)
\left[
1-
\min
\left(
1,
\frac{q(v)}{p(v)}
\right)
\right]
\\
&=
[p(v)-q(v)]_+.
\end{align}
Conditioning on $C=1$ gives
\begin{equation}
\Pr(Z=v\mid C=1)
=
\frac{
[p(v)-q(v)]_+
}{
\TV(p,q)
}.
\end{equation}

On rejection, the correction token is sampled from
\begin{equation}
r(v)
=
\frac{
[q(v)-p(v)]_+
}{
\sum_u[q(u)-p(u)]_+
}.
\end{equation}
Since
\begin{equation}
\sum_u[q(u)-p(u)]_+
=
\TV(p,q),
\end{equation}
we obtain
\begin{equation}
\Pr(Y=v\mid C=1)
=
\frac{
[q(v)-p(v)]_+
}{
\TV(p,q)
}.
\end{equation}

Finally, the unconditional probability of output token $v$ is
\begin{align}
\Pr(Y=v)
&=
\Pr(Y=v,C=0)
+
\Pr(Y=v,C=1)
\\
&=
\min\{p(v),q(v)\}
+
[q(v)-p(v)]_+
\\
&=
q(v).
\end{align}
Therefore, maximal coupling produces the exact $q$ marginal.

% ==================================================================
% E. BRIDGE INTERPRETATION
% ==================================================================

\subsection{Minimal-intervention property}
\label{app:minimal_intervention}

For any coupling $\gamma\in\Gamma(p,q)$,
\begin{align}
\Pr_\gamma(Z=Y)
&=
\sum_v
\Pr_\gamma(Z=v,Y=v)
\\
&\leq
\sum_v
\min\{p(v),q(v)\}.
\end{align}
Therefore,
\begin{align}
\Pr_\gamma(Z\neq Y)
&\geq
1-\sum_v\min\{p(v),q(v)\}
\\
&=
\TV(p,q).
\end{align}
Under the maximal-coupling construction in
Proposition~\ref{prop:coupling},
\begin{equation}
\Pr(Z=v,Y=v)
=
\min\{p(v),q(v)\},
\end{equation}
so equality is attained.
This proves Proposition~\ref{prop:minimal_intervention}.

The intervention-rate bound in
Corollary~\ref{cor:intervention_rate}
follows immediately from Pinsker's inequality
~\citep{cover2006elements},
\begin{equation}
\TV(p,q)
\leq
\sqrt{\frac{1}{2}D_{\mathrm{KL}}(q\Vert p)},
\end{equation}
together with the trust-region constraint.

\section{Teacher--Student Interpretation of the Residual Distribution}
\label{app:bridge_interpretation}

For the geometric bridge,
\begin{equation}
q_\beta(v)
=
\frac{
p(v)^{1-\beta}T(v)^\beta
}{
Z_\beta
},
\qquad
Z_\beta
=
\sum_u
p(u)^{1-\beta}T(u)^\beta.
\end{equation}
Assuming $\beta>0$ and positive softmax probabilities,
\begin{equation}
\frac{q_\beta(v)}{p(v)}
=
\frac{1}{Z_\beta}
\left(
\frac{T(v)}{p(v)}
\right)^\beta.
\end{equation}
Therefore,
\begin{align}
q_\beta(v)>p(v)
&\Longleftrightarrow
\frac{q_\beta(v)}{p(v)}>1
\\
&\Longleftrightarrow
\left(
\frac{T(v)}{p(v)}
\right)^\beta
>
Z_\beta
\\
&\Longleftrightarrow
\log\frac{T(v)}{p(v)}
>
\frac{\log Z_\beta}{\beta}.
\end{align}
Hence, the positive residual
\begin{equation}
[q_\beta(v)-p(v)]_+
\end{equation}
is supported only on tokens whose teacher-to-student likelihood ratio exceeds
a prefix-dependent threshold.

Symmetrically,
\begin{equation}
p(v)>q_\beta(v)
\end{equation}
corresponds to teacher-to-student likelihood ratios below this threshold.
Combined with Proposition~\ref{prop:coupling}, a correction can therefore be
viewed as replacing probability mass that is excessive under the student
relative to the bridge with probability mass favored by the bridge and
associated with sufficiently high teacher-to-student likelihood ratios.

% ==================================================================
% F. SPECULATIVE EXACTNESS
% ==================================================================

\section{Exactness of Engine-Resident Block Verification}
\label{app:spec_exact}

\begin{proposition}[Exactness of engine-resident block verification]
\label{prop:spec_exact}
Assume that the student and teacher distributions in a speculative block are
evaluated on the same proposal prefixes as sequential maximal coupling.
If proposals are committed only up to the first rejection, the correction is
sampled from the exact residual distribution, and all subsequent speculative
states are discarded, then the generated trajectory follows the same
autoregressive distribution as sequential sampling from $q$:
\begin{equation}
\Pr(y_{1:L}\mid x)
=
\prod_{t=1}^{L}
q_t(y_t\mid x,y_{<t}).
\end{equation}
\end{proposition}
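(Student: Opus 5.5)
The plan is to show that, conditional on any committed prefix, the block-verification procedure emits each next token with exactly the conditional law $q_t(\cdot\mid x,y_{<t})$ that sequential maximal coupling would use. The chain rule then gives the product formula. Concretely, I will describe one wave as a sequence of per-position decisions. I will then prove by induction on the number of committed tokens $n$ that
\[
\Pr(y_{1:n}\mid x)=\prod_{t=1}^{n}q_t(y_t\mid x,y_{<t}).
\]
This holds regardless of where wave boundaries fall. The base case $n=0$ is trivial.

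\textbf{Inductive step.} Fix a committed prefix $y_{<t}$; a new wave may or may not start at position $t$. If the wave starts at position $t$, the student drafts $z_t\sim p_t(\cdot\mid x,y_{<t})$, and acceptance uses $a_t(z_t)=\min(1,q_t(z_t)/p_t(z_t))$. If position $t$ lies inside the current wave, then all earlier block positions $t'<t$ were accepted. Hence $z_{t'}=y_{t'}$, so the proposal prefix at position $t$ coincides with the true prefix $y_{<t}$. By the stated assumption, the $p_t$ and $T_t$ evaluated there, and therefore $\beta_t$ and $q_t$, are exactly the sequential quantities. Moreover, the draft $z_t$ is generated from $p_t(\cdot\mid x,y_{<t})$ using fresh randomness, conditionally independent of earlier acceptance coins given the prefix. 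So in either case, position $t$ executes one step of the maximal coupling of Proposition~\ref{prop:coupling} between $p_t$ and $q_t$. On acceptance, $y_t=z_t$ and the wave continues. On rejection, $y_t=c_t\sim r_t$ is committed and the wave ends. Proposition~\ref{prop:coupling} gives $\Pr(y_t=v\mid x,y_{<t})=q_t(v\mid x,y_{<t})$. This completes the step.

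\textbf{Main obstacle.} The delicate part is justifying the conditional-independence claim. Conditioning on ``all previous block positions were accepted'' must not bias the law of $z_t$ or of its acceptance coin. To handle this, I will formalize the wave as drawing, in order, $z_t$ and then an independent uniform $U_t$ for acceptance, so that the event of reaching position $t$ depends only on $(z_{<t},U_{<t})$. Because the drafter is autoregressive, $z_t$ given $z_{<t}=y_{<t}$ is distributed as $p_t(\cdot\mid x,y_{<t})$ irrespective of $U_{<t}$. The discard rule is also essential here. After a rejection at $j$, the drafted $z_{j+1:K}$ were conditioned on $z_j\neq y_j$ and are never used. The next wave redrafts from the corrected prefix $y_{\le j}$, so no stale state enters the law of later tokens.

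\textbf{Termination.} The induction extends to stopping, since EOS and the length cap are deterministic functions of the committed prefix, exactly as in sequential sampling.
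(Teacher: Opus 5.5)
Your proof is correct and follows essentially the same route as the paper's proof: induction over committed tokens, using that proposal prefixes coincide with the true prefix up to the first rejection, that the correction is drawn from the exact residual, and that discarding the stale suffix makes the next wave start afresh from the true prefix. The paper leaves one point implicit, namely that conditioning on the earlier block positions having been accepted leaves the law of $z_t$ and of its acceptance coin unchanged; your explicit use of independent acceptance coins to settle this adds rigor.
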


\begin{proof}
Consider a speculative block beginning at a true prefix $h_t$.
The student autoregressively proposes
\begin{equation}
z_t,z_{t+1},\ldots,z_{t+K-1}.
\end{equation}
The teacher and bridge distributions are evaluated on the corresponding
proposal prefixes.

Suppose the first $j$ proposals are accepted.
Before any rejection occurs, the actual generated prefix is identical to the
proposal prefix.
Therefore, for each accepted position
\begin{equation}
t,t+1,\ldots,t+j-1,
\end{equation}
the precomputed student and teacher distributions are exactly the same
distributions that sequential maximal coupling would evaluate at the true
prefix.
Thus, committing these accepted tokens produces exactly the same conditional
transition as sequential coupling.

If the next proposal is rejected, the algorithm samples a correction from
\begin{equation}
r_{t+j}(v)
\propto
[q_{t+j}(v)-p_{t+j}(v)]_+,
\end{equation}
which is exactly the correction distribution used by sequential maximal
coupling at that prefix.
After this correction, the true prefix differs from the speculative proposal
prefix.
Consequently, all precomputed logits for later proposal positions are invalid
and are discarded.

The next speculative block is constructed from the new true prefix.
Therefore, after every committed token---whether accepted or corrected---the
algorithm has the same true prefix and uses the same next-token transition
kernel as sequential maximal coupling.

By induction over decoding steps,
\begin{equation}
Y_t
\sim
q_t(\cdot\mid x,Y_{<t})
\end{equation}
for every $t$, and hence
\begin{equation}
\Pr(y_{1:L}\mid x)
=
\prod_{t=1}^{L}
q_t(y_t\mid x,y_{<t}).
\end{equation}
Thus, speculative block verification changes only the execution schedule and
does not alter the target autoregressive distribution.
\end{proof}

% ==================================================================
% G. LIMITING CASES
% ==================================================================

\section{Limiting Cases and Sanity Checks}
\label{app:limits}

\subsection{$\epsilon=0$: Recovery of Standard Student Rollout}

If
\begin{equation}
\epsilon=0,
\end{equation}
the only feasible bridge distribution is
\begin{equation}
q_t=p_t.
\end{equation}
Therefore,
\begin{equation}
\TV(p_t,q_t)=0,
\end{equation}
and Proposition~\ref{prop:coupling} gives
\begin{equation}
C_t=0
\end{equation}
almost surely.

Consequently, the teacher-mode supervision term vanishes exactly:
\begin{equation}
\sum_t
M_tC_tL_{\mathrm{TM},t}
=
0.
\end{equation}
The objective in Eq.~\ref{eq:general_objective} reduces to
\begin{equation}
L
=
\frac{1}{N_{\mathrm{valid}}}
\sum_t
M_tL_{\mathrm{RKL},t},
\end{equation}
which is the ordinary student-rollout RKL objective.

\subsection{Correction Activity under Annealing}
\label{app:correction_activity}

Figure~\ref{fig:correction_schedule} compares the observed maximal-coupling
correction probability with the Pinsker upper bound induced by the current
trust-region radius.
As $\epsilon$ is annealed, correction activity decreases and vanishes at the
student-rollout endpoint.

\begin{figure}[t]
\centering
\includegraphics[width=\columnwidth]
{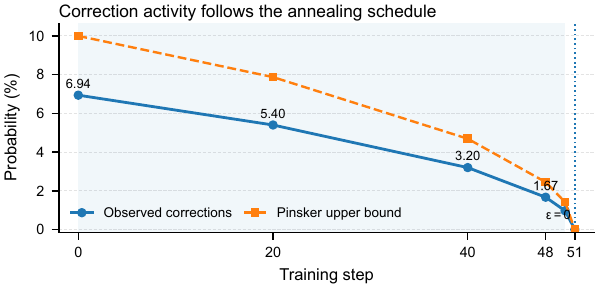}
\caption{
\textbf{Correction activity under annealing.}
The observed correction probability decreases with the trust-region radius,
remains below the Pinsker upper bound, and vanishes when $\epsilon$ reaches
zero at step 51.
}
\label{fig:correction_schedule}
\end{figure}

\FloatBarrier

\subsection{Teacher Endpoint}

If the teacher distribution itself satisfies
\begin{equation}
D_{\mathrm{KL}}(T_t\Vert p_t)\leq\epsilon,
\end{equation}
then the optimal trust-region bridge takes
\begin{equation}
\beta_t=1,
\qquad
q_t=T_t.
\end{equation}
Thus, the bridge smoothly interpolates between student rollout and the teacher
endpoint as the trust-region radius grows.

\section{Systems Implementation and Engineering Evolution}
\label{app:systems}

\subsection{Why Exact-$q$ Rollout Is Expensive}

Vanilla OPD admits a computationally convenient execution schedule: the student can first generate a complete response, after which the fixed
teacher scores the entire sequence in a batched forward pass.
Teacher-guided behavior blending instead requires the teacher distribution
during generation in order to construct $q_t$ at every visited prefix.
TRB explicitly notes this online-inference overhead
~\citep{plyusov2026trb}.

Ignoring transient activations, the generation-time resident memory can be
approximated as
\begin{equation}
M_{\mathrm{OPD}}(t)
\approx
W_s + KV_s(n_t)
\end{equation}
for student-only generation, whereas online teacher-guided generation requires
\begin{equation}
M_q(t)
\approx
W_s + W_T + KV_s(n_t) + KV_T(n_t).
\end{equation}
The additional generation-time footprint is therefore approximately
\begin{equation}
\Delta M_{\mathrm{gen}}(t)
\approx
W_T + KV_T(n_t).
\end{equation}

The arithmetic required by the geometric bridge itself is comparatively simple
vector computation.
In practice, the dominant difficulty in our original implementation was the
execution structure surrounding this computation: token-level RPCs, transfer of
full-vocabulary logits, host synchronization, repeated prefix handling, and
Python control flow.

\subsection{From Token-Wise to External Block Verification}

Our first correctness-oriented exact-$q$ implementation operated one token at a
time.
For every decoding position, it independently:
(i) queried the student for a one-token proposal and full-vocabulary
distribution,
(ii) queried the teacher at the same prefix,
(iii) transferred both distributions outside the inference servers,
(iv) constructed the trust-region bridge in Python,
(v) executed maximal coupling, and
(vi) appended the resulting token before repeating the process.

For a response of length $L$, the execution cost was therefore approximately
\begin{equation}
T_{K=1}
\approx
L
\left(
T_{\mathrm{student\ RPC}}
+
T_{\mathrm{teacher\ RPC}}
+
T_{\mathrm{dense\ transfer}}
+
T_{\mathrm{bridge}}
+
T_{\mathrm{coupling}}
+
T_{\mathrm{host}}
\right).
\end{equation}

In an early unoptimized prototype ($K=1$), external per-token RPC and logit serialization yielded only $\sim$5 tokens/s (compared to $\sim$3.02K tokens/s for student-only generation under the same pilot), illustrating the severe communication bottleneck of naive online bridge evaluation.

We next introduced external $K$-token proposals.
For $K=8$, one student request proposes a block and one teacher suffix-scoring
request evaluates the corresponding proposal prefixes.
This reduces model-service calls from approximately $O(L)$ to approximately
$O(L/K)$ when blocks are largely accepted.
However, bridge construction, dense-logit transfer, coupling, metadata
assembly, and commit decisions still occur in the external Python coordinator.

\subsection{Engine-Resident $q$ Backend}

The final backend moves the entire recurrent block loop inside the inference
runtime.
Each engine replica contains both a student draft model and the fixed teacher.
For each speculative wave it performs:
\begin{enumerate}
    \item student sampling of up to $K$ proposal tokens;
    \item teacher verification of the proposal path;
    \item batched full-vocabulary construction of $q_t$;
    \item fixed-wave batched solution of $\beta_t$;
    \item vectorized maximal-coupling acceptance;
    \item batched positive-residual correction sampling;
    \item first-rejection KV commit/rollback; and
    \item compact token-aligned metadata packing.
\end{enumerate}

Student and teacher logits remain inside the engine.
For a batch of proposal blocks, they are represented as tensors of shape
$[B,K,V]$ and reshaped to $[BK,V]$ for bridge construction.
The trust-region solve uses 16 fixed bisection waves.
Rows that have converged are masked on GPU rather than triggering
host-synchronized early termination.

For a proposal token $a$, maximal-coupling acceptance is evaluated as
\begin{equation}
\alpha(a)
=
\exp
\left(
\min
\left\{
0,
\log q(a)-\log p_s(a)
\right\}
\right).
\end{equation}
A prefix cumulative product over the acceptance indicators identifies the
longest consecutively accepted proposal prefix.
If the first rejection occurs at position $j$, only proposals before $j$ and
the sampled correction are committed; every later speculative state is
discarded because it was computed under an invalid prefix.

The external training loop receives only compact metadata required by the
objective, rather than full-vocabulary logits.

\subsection{Model and Cache Lifecycle}

In our 8-GPU training layout, GPUs 0--3 execute the student actor update and
GPUs 4--7 host $q$-engine replicas.
Each $q$-engine keeps a fixed teacher and a draft replica of the evolving
student resident in memory.

After every actor update, only the student draft weights are synchronized.
The teacher remains fixed throughout training.
Cached states associated with the previous student version are invalidated
before new rollout requests resume.
This avoids repeatedly redeploying or synchronizing the unchanged teacher.

The inference engines also use a separate cache lifecycle from the actor
training path.
Keeping the $q$-engine model and cache pools resident avoids repeated
release/reinitialization overhead and ensures that speculative proposal and
verification states remain under the engine's ownership.

Full-vocabulary distributions are consumed entirely inside the $q$-engine.
The response returned to the training loop contains only token-aligned metadata,
including proposal and committed tokens, the correction mask, selected
student/$q$ log-probabilities, $\beta$, bridge KL, residual mass, acceptance
probability, and the effective trust-region radius.

\begin{table*}[t]
\centering
\small
\caption{
\textbf{Engineering progression of the exact-$q$ rollout implementation}.
The first two rows reflect early exploratory setups, while the final two rows
benchmark mature implementations under the matched workload.
}
\label{tab:engineering_evolution}
\begin{tabular}{lrrl}
\toprule
Implementation & $K$ & Tok/s & Execution \\
\midrule
Token-wise prototype      & 1 & $\sim$5   & Per-token external RPC \\
Early external block path & 8 & 165--299   & External block verification \\
Final external loop       & 8 & 776        & Mature Python/RPC pipeline \\
Engine-resident backend   & 8 & 3,276      & GPU-resident bridge/coupling \\
\bottomrule
\end{tabular}
\end{table*}

\begin{table}[t]
\centering
\caption{\textbf{Representative integrated training-step timing}.}
\label{tab:step_timing}
\begin{tabular}{lr}
\toprule
Component & Time (s) \\
\midrule
$q$ generation & 189.5 \\
Actor update & 24.9 \\
Student weight sync & 2.5 \\
Whole step & 232.6 \\
\bottomrule
\end{tabular}
\end{table}

Student synchronization accounts for only approximately 1.1\% of the
representative step wall-clock, indicating that online rollout rather than
draft-weight synchronization remains the dominant systems cost.

\subsection{Correctness Validation and Limitations}

All system optimizations are required to preserve the same mathematical
sampling semantics as the reference implementation.
We validate the optimized backend along four dimensions.

\paragraph{Bridge correctness.}
GPU bridge construction is compared against a CPU reference on randomized
small-vocabulary distributions.
We verify normalization, the trust-region constraint, the exact
$\epsilon=0$ student endpoint, and the $\beta=1$ teacher endpoint when feasible.

\paragraph{Coupling correctness.}
Monte Carlo samples from maximal coupling are compared against direct samples
from $q$.
We additionally test no-rejection blocks, first-rejection commit semantics,
residual correction sampling, EOS handling in discarded speculative suffixes,
and token-aligned metadata.

\paragraph{Training-state correctness.}
We verify that student weight synchronization changes the draft-model checksum
while the teacher checksum remains fixed, and that cached states from old
student versions are invalidated.

\paragraph{Routing correctness.}
Validation always generates from the current student alone.
When the effective trust-region radius is exactly zero, the rollout follows the
student-only fast path, teacher $q$-forward count is zero, and the correction
mask is identically zero.

The optimized system does not eliminate the intrinsic cost of online teacher
verification.
Its throughput therefore remains below student-only generation and depends on
response length, correction rate, and the number of committed tokens per
verification wave.

% ==================================================================
% H. OBJECTIVE / IMPLEMENTATION SEMANTICS
% ==================================================================

\section{Additional Objective and Implementation Details}
\label{app:implementation}

\paragraph{Correction masking.}
At correction positions, we set the RKL contribution exactly to zero.
The correction loss does not stack with RKL.

\paragraph{Normalization.}
Both accepted-position RKL terms and correction terms are normalized by the
total number of valid response tokens,
\begin{equation}
N_{\mathrm{valid}}
=
\sum_t M_t.
\end{equation}

Accepted-position RKL terms and correction-position teacher-mode terms receive
unit weight and are jointly normalized by the total number of valid response
tokens.
No additional correction-loss coefficient is used.

\paragraph{Role of the residual correction token.}
The correction token sampled from the residual distribution becomes the actual
response token at that position and therefore determines all subsequent
prefixes.
However, it is not used as the direct supervision target.
At correction positions, the direct training target is the teacher's
highest-probability token and therefore need not equal the sampled residual
token.

\paragraph{Full-vocabulary rollout.}
The intermediate rollout distribution $q_t$ and maximal-coupling computation
operate over the full vocabulary.
The teacher Top-1 target is used only for the correction supervision
objective and does not truncate or otherwise modify the rollout distribution.

\paragraph{Validation.}
Evaluation should always generate from the current student alone, rather than
from the teacher-guided rollout engine, so that reported benchmark performance
measures the distilled student itself.

\end{document}